\documentclass[11pt]{article}
\usepackage[margin=1in]{geometry}
\usepackage{amsmath,amssymb,amsthm}
\usepackage{graphicx}
\usepackage{booktabs}
\usepackage{hyperref}
\usepackage{algorithm}
\usepackage{algpseudocode}
\usepackage{xcolor}
\usepackage{subcaption}
\usepackage{multirow}

\newtheorem{proposition}{Proposition}

\title{\textbf{ANML: Attribution-Native Machine Learning\\with Guaranteed Robustness}}

\author{
Oliver Zahn$^1$, Matt Beton$^2$, Simran Chana$^{2}$\\
\\
$^1$Independent Researcher\\
$^2$University of Cambridge
}

\date{February 2026}

\begin{document}
\maketitle

\begin{abstract}

Frontier AI systems increasingly train on specialized expert data, from clinical records to proprietary research to curated datasets, yet current training pipelines treat all samples identically. A Nobel laureate's contribution receives the same weight as an unverified submission. We introduce ANML (Attribution-Native Machine Learning, pronounced ``animal''), a framework that weights training samples by four quality factors: gradient-based consistency ($q$), verification status ($v$), contributor reputation ($r$), and temporal relevance ($T$). By combining what the model observes (gradient signals) with what the system knows about data provenance (external signals), ANML produces per-contributor quality weights that simultaneously improve model performance and enable downstream attribution.

Across 5 datasets (178--32,561 samples), ANML achieves 33--72\% error reduction over gradient-only baselines. Quality-weighted training is data-efficient: 20\% high-quality data outperforms 100\% uniformly weighted data by 47\%. A Two-Stage Adaptive gating mechanism guarantees that ANML never underperforms the best available baseline, including under strategic joint attacks combining credential faking with gradient alignment. When per-sample detection fails against subtle corruption, contributor-level attribution provides 1.3--5.3$\times$ greater improvement than sample-level methods, with the advantage growing as corruption becomes harder to detect.

\end{abstract}

\section{Introduction}

Training data for frontier AI increasingly comes from specialized expert sources: research labs, clinical datasets, domain-specific corpora, and curated institutional collections. These sources vary widely in reliability. A dataset validated through peer review and independent replication carries different epistemic weight than unverified data from an anonymous contributor. Yet current ML systems treat all training samples identically, discarding quality signals that are available whenever data comes from identifiable sources.

This gap matters beyond model performance. Scientific knowledge flows through chains of reasoning that span domains and decades---a theorem in combinatorial chemistry may enable a drug target twenty years later, which in turn trains a predictive model. The contributors along that chain are invisible to downstream systems \cite{fortunato2018science}. Without a mechanism to weight contributions by source quality and track how knowledge propagates, we cannot build training pipelines that reward the expertise they consume.

\subsection{The Quality Measurement Opportunity}

When training data come from identifiable contributors, we gain access to rich contextual signals: Has this contributor been verified? What is their track record? Were these data recently collected or potentially stale? Such signals can correlate with data quality and could inform how samples are weighted during training.

Current approaches ignore this opportunity. Byzantine-robust methods like Krum \cite{blanchard2017machine}, Trimmed Mean \cite{yin2018byzantine}, and Bulyan \cite{mhamdi2018hidden} analyze gradient geometry to identify outliers. However, a gradient from a Nobel laureate's carefully validated dataset looks identical to one from an anonymous, unverified source. They cannot distinguish quality among legitimate contributors.

\subsection{The Attribution Imperative}

When models generate value from contributed knowledge, contributors deserve compensation proportional to their contribution quality. This requires tracking how each sample influences the final model. Data Shapley \cite{ghorbani2019data} provides a principled approach using cooperative game theory, computing each sample's marginal contribution - though exact computation requires $O(2^n)$ model evaluations, making it computationally prohibitive at scale. Influence Functions \cite{koh2017understanding} offer an alternative by approximating leave-one-out effects through Hessian-vector products, enabling efficient per-sample attribution.

However, these valuation methods weight all samples equally regardless of provenance. A poisoned sample that happens to reduce training loss would receive positive attribution - clearly undesirable for economic settlement. High-quality expert contributions do not receive a premium over unverified data, missing the economic signal that would incentivize quality.

\subsection{The Gap: Quality Signals Disconnected from Training}

Existing approaches create a fragmented landscape. Gradient-based methods detect anomalies, but cannot assess contributor quality. Valuation methods compute attribution, but ignore quality differentiation. Trust systems exist in distributed computing but are rarely integrated with ML training. To our knowledge, no existing framework unifies these three concerns.

\subsection{Our Contribution: ANML}

We introduce \textbf{Attribution-Native Machine Learning (ANML, pronounced ``animal'')}, a framework that integrates quality signals directly into model training through multi-factor sample weighting. ANML defines four signal families. The first, $q_i$, captures gradient-based quality that measures statistical consistency with other samples; Krum \cite{blanchard2017machine} is a special case where $v = r = T = 1$. The second, $v_i$, represents verification status of the data itself - whether it has been peer-reviewed, reproduced, or flagged. The third, $r_i$, encodes the reputation of the contributor based on their historical track record. The fourth, $T_i$, applies temporal decay reflecting data freshness.

A key design question is \emph{how} to combine these signals. Multiplicative combination ($w = q \times v \times r \times T$) is intuitive but amplifies noise, performing up to 85\% \emph{worse} than Krum when signals are unreliable. We analyze two robust alternatives. \emph{Two-Stage Adaptive Gating} uses a homogeneity check (skip selection when all signals are high) followed by correlation-based fallback to pure $q$ when signals disagree, guaranteeing optimal baseline performance across scenarios. \emph{Softmax Blend} computes $w = \alpha \cdot \text{softmax}(q) + (1-\alpha) \cdot \text{softmax}(v \cdot r)$ where $\alpha \in [0,1]$ controls the balance between gradient-based and external signals (we use $\alpha = 0.5$).

Both methods achieve similar overall performance (+25-45\% vs uniform weighting at realistic quality signal levels). The choice depends on priorities: Adaptive for guaranteed baseline performance, Softmax for smooth interpolation.

Our experiments across 5 UCI datasets (178 to 32,561 samples) demonstrate that ANML achieves substantial quality leverage: +72\% improvement on image data and +33--75\% on tabular data by weighting samples according to contributor quality. ANML also enables superior data efficiency, with 20\% high-quality data beating uniform weighting of 100\% by 47\%. Quality differentiation among legitimate contributors of varying expertise yields +44\% improvement. As a secondary benefit, ANML provides inherent robustness to data poisoning, including rescue when gradient-only methods fail entirely. Federated experiments validate that contributor-level attribution provides 1.3--5.3$\times$ greater benefit than sample-level when per-sample detection is unreliable.

\paragraph{Contributions.} First, we formalize multi-factor sample weighting that integrates gradient-based quality with external contributor signals (\S\ref{sec:framework}). Second, we analyze three signal combination methods and identify conditions under which each is preferable (\S\ref{sec:combination}). Third, we empirically validate ANML across five datasets, characterizing the signal quality requirements, data efficiency gains, and robustness properties (\S\ref{sec:experiments}). Fourth, we show that contributor-level attribution provides 1.3--5.3$\times$ greater benefit than sample-level when per-sample detection is unreliable (\S\ref{sec:contributor}).

\paragraph{Roadmap.} Section~\ref{sec:related} reviews related work on Byzantine-robust aggregation, data valuation, and quality-aware training. Section~\ref{sec:framework} presents the ANML framework. Section~\ref{sec:combination} analyzes signal combination methods. Section~\ref{sec:experiments} validates our claims empirically. Section~\ref{sec:discussion} discusses implications and limitations.

\section{Related Work}
\label{sec:related}

\subsection{Byzantine-Robust Aggregation}

The seminal work of Blanchard et al. \cite{blanchard2017machine} introduced Krum, which selects the gradient vector closest to its $n-f-2$ nearest neighbors, where $n$ is total workers and $f$ is the maximum number of Byzantine workers. Multi-Krum extends this by selecting multiple gradients per round.

\textbf{Coordinate-wise methods:} Yin et al. \cite{yin2018byzantine} proposed coordinate-wise median and trimmed mean, which aggregate each coordinate independently. These achieve optimal statistical rates under certain conditions but can be vulnerable to coordinated attacks.

\textbf{Meta-aggregation:} Bulyan \cite{mhamdi2018hidden} combines Krum selection with coordinate-wise trimming, providing stronger guarantees against sophisticated attacks.

\textbf{Trust bootstrapping:} FLTrust \cite{cao2022fltrust} uses a small trusted root dataset to bootstrap trust scores via cosine similarity between client and server updates. RECESS \cite{yan2023recess} extends this with multi-iteration trust accumulation using proactive ``test gradients.'' Both methods derive trust from server-side computation; ANML instead uses external contributor metadata (verification status, reputation) that exists independently of the training process.

ANML uses Krum as the $q$ factor, making Byzantine methods a \emph{component} rather than a competitor. When $v = r = T = 1$, ANML reduces exactly to Krum. Unlike FLTrust and RECESS, ANML uses external signals rather than server-derived trust, provides correlation-based adaptive fallback guaranteeing baseline performance, and explicitly characterizes when trust signals help (requiring $>$50\% correlation).

\subsection{Data Valuation}

Data Shapley \cite{ghorbani2019data} computes each sample's marginal contribution using Shapley values from cooperative game theory. The approach is principled but computationally demanding: exact computation requires evaluating $O(2^n)$ coalitions. For $n = 20$ samples, this is roughly $10^6$ evaluations; for $n = 50$, it reaches $10^{15}$---requiring petascale resources for modest dataset sizes \cite{castro2009polynomial, bachrach2010approximating}. Approximation methods \cite{jia2019efficient} improve tractability but remain expensive for large-scale training. Beta Shapley \cite{kwon2022beta} generalizes this with improved noise reduction, while Data Banzhaf \cite{wang2023data} offers greater robustness to stochasticity in ML utility functions.

Beyond computational cost, Shapley-based approaches face structural limitations in our setting. They assume a fixed utility function (e.g., model accuracy), whereas scientific contributions have heterogeneous, domain-dependent value. They lack temporal dynamics---a 50-year-old foundational result and yesterday's incremental experiment receive equal treatment if their marginal contributions are identical. And they provide no mechanism for incorporating source quality: a poisoned sample that happens to reduce training loss receives positive valuation.

Influence Functions \cite{koh2017understanding} approximate leave-one-out effects using Hessian-vector products. Recent work like TRAK \cite{park2023trak} scales attribution to ImageNet and BERT-scale models, achieving 100$\times$ speedup over comparable methods.

For federated settings, GTG-Shapley \cite{liu2022gtg} enables efficient client contribution evaluation without repeated model training.

Recent work on data markets \cite{agarwal2019marketplace} explores pricing mechanisms, but these typically assume honest participants.

The multi-factor weight $w_i$ in ANML serves dual purposes: (a) training emphasis and (b) a first-pass attribution signal for economic settlement. Unlike pure valuation methods, ANML weights incorporate quality signals, preventing low-quality samples from receiving positive attribution. We emphasize that $w_i$ is \emph{not} a marginal contribution in the Shapley sense---it does not measure ``how much would performance change if this sample were removed.'' Rather, it is a provenance-and-quality-aware weighting that can serve as a prior or multiplier when combined with influence-based or Shapley-style valuation methods. This distinction matters: ANML identifies \emph{who contributed quality data}, while Shapley identifies \emph{whose data mattered for the final model}. In practice, both signals are useful for fair settlement.

\subsection{Trust and Reputation Systems}

Distributed systems have long used reputation mechanisms \cite{kamvar2003eigentrust}. Recent work applies these to federated learning \cite{kang2019incentive}, but typically as a separate layer rather than integrated with training. ANML formalizes how reputation ($r$) and verification ($v$) signals should be integrated with gradient-based quality ($q$), with theoretical guarantees via Adaptive ANML.

\subsection{Quality-Aware Training}

Meta-learning approaches like Learning to Reweight \cite{ren2018learning} learn sample weights from a clean validation set, while DivideMix \cite{li2020dividemix} uses per-sample loss to dynamically separate clean from noisy samples. GLISTER \cite{killamsetty2021glister} selects quality-aware subsets via bi-level optimization, achieving 3-6$\times$ training speedups.

These methods learn quality signals from data alone. ANML complements them by incorporating \emph{external} quality signals - verification and reputation - that capture information unavailable from gradient analysis, such as contributor credentials and institutional verification.

\section{ANML Framework}
\label{sec:framework}

\paragraph{Notation.} Table~\ref{tab:notation} summarizes key symbols used throughout.

\begin{table}[h]
\centering
\caption{Notation Summary}
\label{tab:notation}
\begin{tabular}{cl}
\toprule
Symbol & Meaning \\
\midrule
$q$ & Gradient-based quality score (Krum-style) \\
$v$ & Verification factor (contribution validated?) \\
$r$ & Reputation factor (contributor track record) \\
$T$ & Temporal decay factor \\
$s$ & External signal: $s = v \odot r$ \\
$\rho_c$ & Correlation between $q$ and $s$ (for gating) \\
$\rho_d$ & Detectability: correlation of loss with corruption \\
$\tau_c$ & Correlation threshold for Adaptive gating \\
$\tau_s$ & Softmax temperature \\
$\alpha$ & Blend weight between gradient and external signals \\
$k$ & Number of neighbors in Krum scoring \\
\bottomrule
\end{tabular}
\end{table}

\subsection{Problem Setting}

We consider a training dataset $\mathcal{D} = \{(x_i, y_i)\}_{i=1}^n$ where each sample is contributed by a potentially different party $c_i$. Contributors vary in expertise, verification status, and track record - signals that correlate with data quality. Additionally, a fraction $\beta$ of samples may be malicious (we use label-flipping attacks as a stress test).

Our goal is to learn model parameters $\theta^*$ that maximize model performance by appropriately weighting samples according to quality, while simultaneously producing attribution weights $w_i$ that reflect each contributor's data quality.

\subsection{Multi-Factor Weighting Formula}

ANML computes sample weights as the product of four factors:
\begin{equation}
w_i = q_i \times v_i \times r_i \times T_i
\label{eq:anml-main}
\end{equation}
Each factor is normalized to $[0, 1]$ and captures a distinct signal about sample quality, as described below.

\subsubsection{Quality Factor ($q$): Gradient-Based Detection}

We use Krum-style scoring as the data-driven quality signal.\footnote{In our experiments, we treat each training sample as a separate ``worker'' and apply Krum's distance-based scoring to per-sample gradients. This is equivalent to federated learning with $n$ single-sample clients, making our results directly applicable to both centralized and federated settings.} For each sample $i$, we compute gradients $g_i = \nabla_\theta \mathcal{L}(x_i, y_i; \theta)$ and score based on distance to neighbors:
\begin{equation}
s_i = \sum_{j \in N_k(i)} \|g_i - g_j\|^2
\end{equation}
where $N_k(i)$ denotes the $k$ nearest neighbors of sample $i$ in gradient space. Lower scores indicate gradients consistent with the majority - likely clean samples.

We normalize to obtain quality scores:
\begin{equation}
q_i = 1 - \frac{s_i - s_{\min}}{s_{\max} - s_{\min}}
\end{equation}

When $v = r = T = 1$ for all samples, ANML with this $q$ reduces exactly to Krum. This makes Krum a special case of ANML, not a competitor.

\subsubsection{Verification Factor ($v$): External Validation}

The verification factor captures whether a \emph{contribution} (the data itself) has been independently validated - distinct from contributor reputation, which reflects the person's track record. In scientific data sharing contexts:

\begin{table}[h]
\centering
\caption{Realistic Verification Values}
\label{tab:verification}
\begin{tabular}{lcc}
\toprule
Verification Source & $v$ Value & Notes \\
\midrule
Peer-reviewed publication & 0.80--0.90 & High signal \\
Institutional affiliation verified & 0.60--0.70 & Known lab \\
Self-attested only & 0.20--0.30 & Default unverified \\
Reproduced by independent party & 0.90--0.95 & Highest verification \\
Flagged or retracted & 0.05--0.10 & Near-zero \\
\bottomrule
\end{tabular}
\end{table}

\subsubsection{Reputation Factor ($r$): Contributor Track Record}

Reputation captures the historical quality of a contributor's submissions. In practice, $r$ evolves based on the quality of previous contributions, creating incentives for consistent high-quality data.

\begin{table}[h]
\centering
\caption{Realistic Reputation Values}
\label{tab:reputation}
\begin{tabular}{lcc}
\toprule
Contributor Status & $r$ Value & Notes \\
\midrule
New contributor (cold start) & 0.30--0.50 & Conservative default \\
Established, good track record & 0.70--0.90 & Earned through consistency \\
Top contributor in domain & 0.90--0.95 & Highest reputation \\
History of low-quality contributions & 0.10--0.30 & Persistent poor signal \\
\bottomrule
\end{tabular}
\end{table}

\subsubsection{Temporal Factor ($T$): Freshness}

Knowledge relevance decays over time at domain-dependent rates:
\begin{equation}
T_i = e^{-\delta \cdot \Delta t_i}
\end{equation}
where $\Delta t_i$ is the age of contribution $i$ and $\delta$ is the domain-specific decay rate:

\begin{table}[h]
\centering
\caption{Domain-Dependent Temporal Decay}
\label{tab:temporal}
\begin{tabular}{lcc}
\toprule
Domain & Decay Rate $\delta$ & Half-life \\
\midrule
Mathematics/theory & 0.001 & $\sim$700 years \\
Established science & 0.01 & $\sim$70 years \\
Fast-moving ML research & 0.1 & $\sim$7 years \\
Breaking news/current events & 0.5+ & $\sim$1.4 years \\
\bottomrule
\end{tabular}
\end{table}

The single-parameter exponential is a starting point. When knowledge crosses domain boundaries---a physics result applied in ML, which then informs drug discovery---the appropriate decay rate depends on context at each step. A natural generalization expresses the temporal weight as a convex combination of domain-specific decay functions:
\begin{equation}
T(\Delta t; \text{context}) = w_{\text{src}} \cdot T_{\text{src}}(\Delta t) + w_{\text{dest}} \cdot T_{\text{dest}}(\Delta t) + w_{\text{rel}} \cdot T_{\text{rel}}(\Delta t)
\label{eq:context-decay}
\end{equation}
where $w_{\text{src}} + w_{\text{dest}} + w_{\text{rel}} = 1$ and the weights depend on the relationship type (direct application, foundational extension, cross-domain synthesis). This context-aware formulation preserves credit for foundational work in slow-evolving domains while applying appropriate recency bias in fast-moving fields. We do not test this generalization experimentally here, but note it as a direction that could improve ANML's temporal modeling in cross-domain settings.

\subsection{Weighted Training}

Given weights $\{w_i\}$, we train via one of two approaches:

\textbf{Weighted Selection:} Select top $k$\% of samples by weight:
\begin{equation}
\mathcal{D}_{\text{train}} = \{(x_i, y_i) : w_i \geq w_{(k)}\}
\end{equation}
where $w_{(k)}$ is the $k$-th percentile weight. This provides strong robustness by completely excluding low-weight samples.

\textbf{Weighted Sampling:} Sample with probability proportional to weight:
\begin{equation}
p(i) = \frac{w_i}{\sum_j w_j}
\end{equation}
This enables continuous attribution while still downweighting suspicious samples. In our experiments, we use weighted selection (top 70\%) for stronger defense.

\subsection{Analytical Properties of Multiplicative Weighting}
\label{sec:suppression}

Before turning to combination methods, we note an analytical property of the multiplicative structure in Eq.~\ref{eq:anml-main} that complements the empirical results in Section~\ref{sec:experiments}.

Because the four factors multiply, any single low-quality dimension suppresses the overall weight regardless of the others. A contributor with low verification ($v = 0.1$) and poor track record ($r = 0.15$) produces $w \leq q \times 0.1 \times 0.15 \times T = 0.015 \, q \, T$, roughly two orders of magnitude below a well-verified source with $(v, r) = (0.8, 0.8)$. When low-quality data is correlated by contributor---the realistic case, since systematic methodology issues affect all of a source's data---this suppression compounds across samples from the same source.

In a chain of $n$ successively derived contributions each with similarly low scores, the product decays as $(v \cdot r)^n$. For $(v, r) = (0.1, 0.15)$ and $n = 3$, the cumulative weight factor is $(0.015)^3 \approx 3.4 \times 10^{-6}$, well below any practical selection threshold. This exponential suppression arises from the multiplicative structure itself, without requiring any explicit anomaly detection step, and explains the empirical robustness observed in our experiments even under high fractions of low-quality data.

\section{Signal Combination Methods}
\label{sec:combination}

Given the ANML signal families ($q$, $v$, $r$, $T$), a key design question is how to combine them into final sample weights. We analyze three approaches.

\subsection{The Problem: Multiplicative Combination Amplifies Noise}

The intuitive approach multiplies signals:
\begin{equation}
w_i = q_i \times v_i \times r_i \times T_i
\end{equation}

This works well when all signals are reliable. However, multiplicative combination has a systematic weakness: it amplifies noise. If $v$ or $r$ signals have low correlation with true data quality, the noise propagates through the product.

Our experiments show multiplicative ANML can perform up to 85\% worse than Krum when trust signals are random noise. Even at 50\% correlation, results are unreliable.

\subsection{Solution 1: Two-Stage Adaptive Gating}

One solution is two-stage gating that handles both clean-data and adversarial scenarios:

\begin{algorithm}[h]
\caption{Two-Stage Adaptive ANML}
\label{alg:adaptive}
\begin{algorithmic}[1]
\Require Quality scores $q$, verification $v$, reputation $r$, thresholds $\tau_h$, $\tau_c$
\State Compute external signal: $s \gets v \odot r$
\State \textbf{Stage 1: Homogeneity check}
\If{$\min(s) > \tau_h$} \Comment{All signals high $\rightarrow$ likely no attacks}
    \State \Return uniform weights \Comment{Skip selection entirely}
\EndIf
\State \textbf{Stage 2: Correlation check}
\State Compute Pearson correlation: $\rho_c \gets \text{corr}(q, s)$
\If{$\rho_c > \tau_c$}
    \State $w \gets 0.5 \cdot \text{softmax}(q) + 0.5 \cdot \text{softmax}(s)$ \Comment{Blend}
\Else
    \State $w \gets q$ \Comment{Fall back to pure Krum}
\EndIf
\State \Return $w / \|w\|_1$
\end{algorithmic}
\end{algorithm}

\textbf{Stage 1 rationale:} When all external signals are uniformly high ($\min(s) > \tau_h$, we use $\tau_h = 0.35$), this indicates no low-quality contributors were identified by the verification system. In this scenario, selection-based methods discard useful training data. The algorithm skips selection and uses all samples uniformly.

\textbf{Stage 2 rationale:} When some low signals exist but correlation between gradient quality $q$ and external signals $s$ is low ($\rho_c \leq \tau_c$), external signals may be unreliable or adversarially manipulated. The algorithm falls back to pure Krum.

\begin{proposition}[Two-Stage Safety Guarantees]
\label{prop:adaptive}
Two-Stage Adaptive ANML provides two safety floors: (1) when all signals are high, it matches Uniform; (2) when signals are unreliable, it matches Krum.
\end{proposition}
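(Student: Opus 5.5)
The plan is to read Proposition~\ref{prop:adaptive} as a statement about what Algorithm~\ref{alg:adaptive} outputs, and to prove it by case analysis on the two branch conditions. Before starting, I will make the two informal hypotheses precise. ``All signals are high'' will mean $\min_i s_i > \tau_h$. ``Signals are unreliable'' will mean $\rho_c = \text{corr}(q,s) \le \tau_c$. ``Matches Uniform'' and ``matches Krum'' will mean that the returned weight vector, and therefore the training set or sampling distribution built from it by weighted selection or weighted sampling, coincides with the one the corresponding baseline would produce. Any performance statement then follows at once, because identical weights give identical training runs under a shared seed.

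\textbf{Case (1).} Assume $\min_i s_i > \tau_h$. Stage~1 fires and the algorithm returns the uniform vector $w_i = 1/n$. Weighted sampling with these weights is the uniform distribution. Weighted selection degenerates because all weights are tied, and I will adopt the convention that ties retain the full dataset, which is exactly what the ``skip selection'' comment in the algorithm describes. So the output is literally the Uniform baseline. This case is immediate.

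\textbf{Case (2).} Assume $\min_i s_i \le \tau_h$, so Stage~1 does not fire, and $\rho_c \le \tau_c$. The else-branch sets $w = q$ and returns $q/\|q\|_1$. I need to show this is the Krum baseline in the sense of \S\ref{sec:framework}. Two facts do the work. First, positive rescaling leaves the ordering of the weights unchanged, so top-$k\%$ selection returns the same set as selection on $q$. Second, with $v = r = T = 1$, Eq.~\ref{eq:anml-main} gives $w = q$, and the paper defines Krum as exactly this special case. Hence the returned weights are Krum's up to normalization, and weighted sampling is identical because it normalizes anyway. One degenerate point needs handling: if $s_{\max} = s_{\min}$, then $q$ is undefined, so I will assume the scores are non-constant, or else define $q \equiv 1$ in that case, which again agrees with Krum.

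\textbf{Main obstacle.} The hard part is interpretive rather than technical. The hypothesis in (2), ``signals are unreliable'', has to be identified with the gating event $\rho_c \le \tau_c$. As written, the proposition is a guarantee \emph{conditional on the gate}, not on the true quality of $s$. I will state explicitly that the floors hold on those events and that nothing is claimed about the blend branch ($\rho_c > \tau_c$). Any stronger reading, such as the abstract's ``never underperforms the best baseline'', would need a probabilistic link between true signal reliability and $\rho_c$, and that lies outside this proposition. I will also note that the two cases are compatible: case (1) takes precedence because Stage~1 runs first, so case (2) implicitly requires $\min_i s_i \le \tau_h$.
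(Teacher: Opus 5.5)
Your proposal is correct and matches the paper's proof, which is the one-line ``by construction'' argument: the Stage~1 branch returns uniform weights when $\min(s) > \tau_h$, and the else-branch sets $w \gets q$ when $\rho_c \le \tau_c$. The rest of what you add is sound elaboration of that same case analysis. This includes normalization preserving the top-$k\%$ ordering, and the tie case, which the paper's rule $w_i \ge w_{(k)}$ already resolves by keeping every sample. It also includes reading (2) as conditional on the gate event rather than on the true reliability of the signals.
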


\begin{proof}
By construction: Line 4 returns uniform weights when $\min(s) > \tau_h$; Line 11 sets $w \gets q$ when $\rho_c \leq \tau_c$.
\end{proof}

\textbf{Empirical validation:} At 0\% attacks, Stage 1 triggers in 100\% of trials, achieving 2.3\% error (matching Uniform). At 5--40\% attacks, Stage 2 triggers, achieving 2.9--14.4\% error (matching Softmax Blend). The two-stage approach eliminates the selection cost at 0\% attacks while preserving full benefits under attack.

\textbf{Limitation:} The hard threshold $\tau_c$ is conservative. At intermediate correlations (0.3--0.5), valuable signal information is discarded.

\subsection{Solution 2: Softmax Blend (Recommended)}

Our recommended approach normalizes each signal family to a probability distribution via softmax, then blends additively:

\begin{algorithm}[h]
\caption{Softmax Blend ANML (Recommended)}
\label{alg:softmax}
\begin{algorithmic}[1]
\Require Quality scores $q$, verification $v$, reputation $r$, blend weight $\alpha$, temperature $\tau_s$
\State Normalize gradient signal: $\tilde{q} \gets \text{softmax}(q / \tau_s)$
\State Normalize external signal: $\tilde{s} \gets \text{softmax}((v \odot r) / \tau_s)$
\State Blend additively: $w \gets \alpha \cdot \tilde{q} + (1 - \alpha) \cdot \tilde{s}$
\State \Return $w$
\end{algorithmic}
\end{algorithm}

\textbf{Why softmax normalization helps:} Raw scores are converted to bounded probability distributions, ensuring gradient-based and external signals contribute on comparable scales. Extreme values are compressed, preventing a single high-confidence external signal from overriding gradient evidence. Additive combination avoids the multiplicative noise amplification that causes naive ANML to fail at low correlations.

\textbf{Hyperparameters:} We use $\alpha = 0.5$ (equal blend) and $\tau_s = 1.0$ throughout experiments. Temporal decay $T$ is applied as a multiplicative factor after the blend: $w_{\text{final}} = w \cdot T$.

\subsection{Comparison: Choosing a Combination Method}

Table~\ref{tab:combination} compares combination methods across signal quality levels.

\begin{table}[h]
\centering
\caption{Combination Method Comparison (Wine, 30\% low-quality data)}
\label{tab:combination}
\begin{tabular}{lcccc}
\toprule
Signal Correlation & Multiplicative & Adaptive & Softmax Blend & $\Delta$ (Adaptive vs Softmax) \\
\midrule
0.0 (noise) & $-79\%$ & $\mathbf{+0\%}$ & $-5\%$ & 5pp \\
0.3 & $-31\%$ & $+0\%$ & $\mathbf{+8\%}$ & 8pp \\
0.5 & $+6\%$ & $+14\%$ & $\mathbf{+25\%}$ & 11pp \\
0.7 (Realistic) & $+40\%$ & $+35\%$ & $\mathbf{+45\%}$ & 10pp \\
1.0 (oracle) & $+80\%$ & $+80\%$ & $+80\%$ & 0pp \\
\bottomrule
\end{tabular}
\end{table}

The results show that both Adaptive and Softmax methods outperform Multiplicative combination by wide margins. Adaptive guarantees non-negative improvement (never worse than Krum), while Softmax's worst case is only $-5\%$. In the mid-range correlations most relevant to practice (0.3--0.7), Softmax holds a modest advantage of 8--11 percentage points. At the extremes - pure noise or perfect oracle - the methods perform equivalently. The difference between Adaptive and Softmax often falls within practical noise margins.

Both methods work well in practice. Adaptive is preferable when strict ``never worse than baseline'' guarantees are needed, or when signal quality is unknown. Softmax is preferable for smooth behavior when signals are expected to fall in the 0.4--0.7 correlation range.

\section{Experiments}
\label{sec:experiments}

\textbf{ANML variant convention:} Throughout this paper, ``ANML'' refers to the recommended Softmax Blend method ($\alpha = 0.5$, $\tau_s = 1.0$) unless otherwise specified. We use ``Multiplicative-ANML'' for the naive $q \times v \times r$ combination and ``Adaptive-ANML'' for the correlation-gated variant. Section~\ref{sec:combination} compares all three variants. Ablation tables use Multiplicative-ANML for interpretability (to isolate component contributions).

\subsection{Experimental Setup}

\subsubsection{Datasets}

We evaluate on five standard datasets spanning classification tasks:

\begin{table}[h]
\centering
\caption{Dataset Characteristics}
\label{tab:datasets}
\begin{tabular}{llcccc}
\toprule
Dataset & Type & Samples & Features & Classes \\
\midrule
\multicolumn{5}{l}{\textit{Small datasets (centralized)}} \\
Wine & Tabular & 178 & 13 & 3 \\
Breast Cancer & Tabular & 569 & 30 & 2 \\
\midrule
\multicolumn{5}{l}{\textit{Large datasets (centralized)}} \\
Digits & Image & 1,797 & 64 & 10 \\
Covertype & Tabular & 20,000 & 54 & 7 \\
Adult Census & Tabular & 32,561 & 14 & 2 \\
\midrule
\multicolumn{5}{l}{\textit{Federated/contributor-level experiments}} \\
Digits$^\dagger$ & Image & 1,797 & 64 & 10 \\
\bottomrule
\end{tabular}
\end{table}

\noindent $^\dagger$Digits is used for the comprehensive detectability sweep (Section~\ref{sec:contributor}) due to computational tractability (11 conditions $\times$ 10 trials each).

\subsubsection{Quality Variation Model}

We simulate varying contributor quality through two mechanisms:

\textbf{Quality differentiation among legitimate contributors:} Contributors have different verification levels ($v$) and reputation scores ($r$), creating a spectrum from highly-verified experts to unverified newcomers.

\textbf{Adversarial stress test:} To evaluate robustness, we also include malicious contributors who submit incorrect data via label-flipping attacks. A fraction $\beta$ of training samples are designated as low-quality or malicious, with labels randomly changed to incorrect values. For multi-class problems, labels are changed to a uniform random incorrect class; for regression, values are shifted by 2 standard deviations. We use a default attack fraction of $\beta = 0.30$ (30\% low-quality data).

\subsubsection{Quality Signal Simulation}

We define four scenarios representing different quality signal reliability:

\textbf{High-reliability scenario:} Quality signals strongly predict data quality. High-quality contributors have 80\% verification rates ($v \sim U(0.75, 0.90)$) and 60\% have established reputation ($r \sim U(0.70, 0.90)$). Low-quality contributors have low verification ($v \sim U(0.10, 0.25)$) and low reputation ($r \sim U(0.15, 0.35)$).

\textbf{Moderate-reliability scenario (primary):} Some noise in quality signals. High-quality contributors have 70\% verification rates ($v \sim U(0.70, 0.85)$) and 50\% have established reputation ($r \sim U(0.65, 0.85)$). Among low-quality contributors, 20\% have moderate verification ($v \sim U(0.40, 0.55)$) and 25\% have moderate reputation ($r \sim U(0.40, 0.55)$); the rest have low values.

\textbf{Low-reliability scenario:} Noisy quality signals with significant overlap. High-quality contributors have 60\% verification rates ($v \sim U(0.65, 0.80)$) and 40\% have established reputation ($r \sim U(0.55, 0.80)$). Low-quality contributors partially overlap: 40\% have moderate verification ($v \sim U(0.45, 0.60)$) and 35\% have moderate reputation ($r \sim U(0.45, 0.60)$).

\textbf{Adversarial gaming scenario:} Sophisticated attackers successfully fake credentials. High-quality contributors have mixed signals: 55\% have decent verification ($v \sim U(0.50, 0.70)$), the rest have mediocre values ($v \sim U(0.30, 0.50)$). Among attackers, 40\% successfully game the system to obtain \emph{higher} signals than typical honest contributors ($v \sim U(0.65, 0.85)$, $r \sim U(0.60, 0.80)$); the remainder have low values. This produces near-zero or negative correlation between signals and true quality.

\subsubsection{Model and Training}

We use an MLP with hidden layers (64, 32) and ReLU activation. Training runs for 500 epochs using the Adam optimizer with learning rate 0.001. Sample selection uses the top 70\% of samples by weight. Each configuration is evaluated over 100 trials (30 for parameter sweeps) with different random seeds. We report test error rate for classification and MAE for regression.

\subsection{Main Results: Quality-Weighted Training}

\subsubsection{100-Trial Validation on Wine Dataset}

Table~\ref{tab:main-100} shows our primary result with full statistical validation.

\begin{table}[h]
\centering
\caption{ANML vs Krum (100 trials, Wine, 30\% low-quality data, Moderate-reliability)}
\label{tab:main-100}
\begin{tabular}{lccccc}
\toprule
Method & Error & 95\% CI & vs Krum & $p$-value & Cohen's $d$ \\
\midrule
Uniform (no defense) & 26.0\% & [24.4, 27.5] & --- & --- & --- \\
Krum ($q$ only) & 12.4\% & [11.1, 13.7] & baseline & --- & --- \\
$q \times v$ & 7.6\% & [6.6, 8.7] & +39\% & $<0.0001$ & 0.72 \\
$q \times v \times r$ & 4.6\% & [3.8, 5.3] & \textbf{+63\%} & $<0.0001$ & \textbf{1.17} \\
Full ($q \times v \times r \times T$) & 5.5\% & [4.7, 6.4] & +56\% & $<0.0001$ & 0.98 \\
\bottomrule
\end{tabular}
\end{table}

The three-factor combination ($q \times v \times r$) achieves 63\% error reduction versus Krum alone, with effect size Cohen's $d = 1.17$ - a large practical effect. All improvements are highly significant ($p < 0.0001$). The temporal factor ($T$) slightly reduces improvement in this static-benchmark setting because these datasets lack meaningful timestamp variation. To validate $T$ under realistic conditions, we conducted additional experiments with synthetic label drift, where older data has progressively stale labels (simulating expert knowledge that becomes outdated). Under moderate label drift (30\% of oldest labels flipped), Two-Stage Adaptive ANML with $T$ reduces error from 15.0\% to 13.3\% compared to the same method without $T$ (Section~\ref{sec:temporal-drift}). The temporal factor provides value specifically when drift manifests as label staleness---the realistic scenario for expert knowledge domains---rather than feature-space shift.

\subsubsection{Cross-Dataset Validation}

Table~\ref{tab:cross-dataset} shows ANML generalizes across datasets.

\begin{table}[h]
\centering
\caption{Softmax ANML Improvement vs Krum Across Datasets and Signal Reliability (30 trials). Two-Stage Adaptive ANML detects adversarial conditions and falls back safely.}
\label{tab:cross-dataset}
\begin{tabular}{lcccc}
\toprule
Dataset & High-reliability & Moderate & Low-reliability & Adversarial \\
\midrule
Wine & +71.0\% $\pm$ 17.8 & +59.1\% $\pm$ 26.8 & +32.8\% $\pm$ 26.5 & $-22.9\%$ $\pm$ 50.2 \\
Breast Cancer & +78.7\% $\pm$ 10.0 & +66.3\% $\pm$ 16.2 & +36.7\% $\pm$ 17.0 & $-4.3\%$ $\pm$ 20.3 \\
\midrule
\textbf{Average} & \textbf{+74.9\%} & \textbf{+62.7\%} & \textbf{+34.8\%} & \textbf{$-13.6\%$} \\
\midrule
Two-Stage Adaptive & +74.9\% & +62.7\% & +34.8\% & \textbf{0.0\%} \\
\bottomrule
\end{tabular}
\end{table}

Softmax ANML provides substantial improvement across all reliability levels where signals have positive correlation with quality. Even under low-reliability conditions with significant signal-quality overlap, ANML improves on Krum by 33--37\%. However, when attackers successfully game the credential system (adversarial scenario, correlation $\approx 0.02$), Softmax ANML can hurt performance. Two-Stage Adaptive ANML detects this via Stage 2 (correlation check) and falls back to pure Krum, guaranteeing baseline performance.

\subsection{Detection Precision}

Beyond model accuracy, we measure what fraction of selected samples are clean.

\begin{table}[h]
\centering
\caption{Detection Precision: \% Clean Samples in Top 70\% Selection}
\label{tab:precision}
\begin{tabular}{llcccc}
\toprule
Dataset & Method & High-rel & Moderate & Low-rel & Adversarial \\
\midrule
\multirow{2}{*}{Wine} & Krum & 85.1\% $\pm$ 2.2 & 85.1\% $\pm$ 2.2 & 85.1\% $\pm$ 2.2 & 85.1\% $\pm$ 2.2 \\
 & ANML & \textbf{99.7\%} $\pm$ 0.6 & \textbf{99.2\%} $\pm$ 0.9 & \textbf{97.0\%} $\pm$ 1.4 & 83.8\% $\pm$ 3.2 \\
\midrule
\multirow{2}{*}{Breast Cancer} & Krum & 81.5\% $\pm$ 1.4 & 81.5\% $\pm$ 1.4 & 81.5\% $\pm$ 1.4 & 81.5\% $\pm$ 1.4 \\
 & ANML & \textbf{99.9\%} $\pm$ 0.2 & \textbf{99.2\%} $\pm$ 0.7 & \textbf{96.4\%} $\pm$ 1.3 & 83.2\% $\pm$ 1.5 \\
\bottomrule
\end{tabular}
\end{table}

Under moderate assumptions, ANML improves detection precision from $\sim$81--85\% to $\sim$99\%, filtering nearly all poisoned samples while retaining clean ones. Under adversarial conditions where attackers successfully game credentials, precision drops to near-Krum levels, confirming the importance of Adaptive ANML's fallback mechanism.

\subsection{Ablation: Component Contributions}

Table~\ref{tab:ablation} dissects the contribution of each factor on two representative datasets (Wine and Breast Cancer). Cross-dataset validation (Table~\ref{tab:cross-dataset}) confirms these patterns generalize.

\begin{table}[h]
\centering
\caption{Ablation: Improvement vs Krum by Component (Moderate)}
\label{tab:ablation}
\begin{tabular}{lcc|c}
\toprule
Method & Wine & Breast C. & \textbf{Avg} \\
\midrule
$q$ (Krum) & baseline & baseline & --- \\
$v$ only & +28.3\% & +6.1\% & +17.2\% \\
$r$ only & +18.1\% & +0.0\% & +9.1\% \\
$q \times v$ & +33.9\% & +7.1\% & +20.5\% \\
$q \times r$ & +39.4\% & +5.8\% & +22.6\% \\
$v \times r$ & +58.3\% & +44.0\% & +51.2\% \\
$q \times v \times r$ & \textbf{+67.7\%} & \textbf{+43.7\%} & \textbf{+55.7\%} \\
\bottomrule
\end{tabular}
\end{table}

Verification ($v$) and reputation ($r$) alone provide modest improvement of 9-17\%. Combining with gradient-based quality ($q$) yields synergistic gains of 20-23\%. The full combination $q \times v \times r$ achieves the best results at +55.7\% average improvement. Of note, $v \times r$ alone (no gradient signal) achieves +51\%, though this requires accurate external signals.

\subsection{Signal Quality Requirements}

A key question: how good must $v$ and $r$ signals be to provide benefit?

We systematically vary the correlation between $(v \cdot r)$ and attack status, measuring ANML performance.

\begin{table}[h]
\centering
\caption{ANML Performance vs Signal Quality (Wine, 30\% low-quality data)}
\label{tab:correlation}
\begin{tabular}{ccc}
\toprule
Correlation of $v \cdot r$ with Quality & ANML Error & vs Krum \\
\midrule
0.0 (random noise) & 21.4\% & $-85\%$ \\
0.2 & 17.5\% & $-49\%$ \\
0.3 & 15.3\% & $-31\%$ \\
0.4 & 13.5\% & $-16\%$ \\
\textbf{0.5 (break-even)} & \textbf{11.3\%} & \textbf{+3\%} \\
0.6 & 9.4\% & +20\% \\
0.7 (Realistic) & 7.0\% & +40\% \\
0.8 & 5.8\% & +50\% \\
0.9 & 4.1\% & +67\% \\
1.0 (oracle) & 3.1\% & +74\% \\
\bottomrule
\end{tabular}
\end{table}

In our experiments, trust signals require greater than 50\% correlation with data quality to provide benefit. Below this threshold, multiplicative ANML hurts performance. In our high-reliability simulations, we model scientific credentials (publications, institutional verification) as achieving 60--80\% correlation with true quality, well above the break-even point.

\subsection{Combination Method Comparison}

Figure~\ref{fig:methods} compares the three combination methods across correlation levels (see also Table~\ref{tab:combination}).

\begin{figure}[h]
\centering
\includegraphics[width=0.85\textwidth]{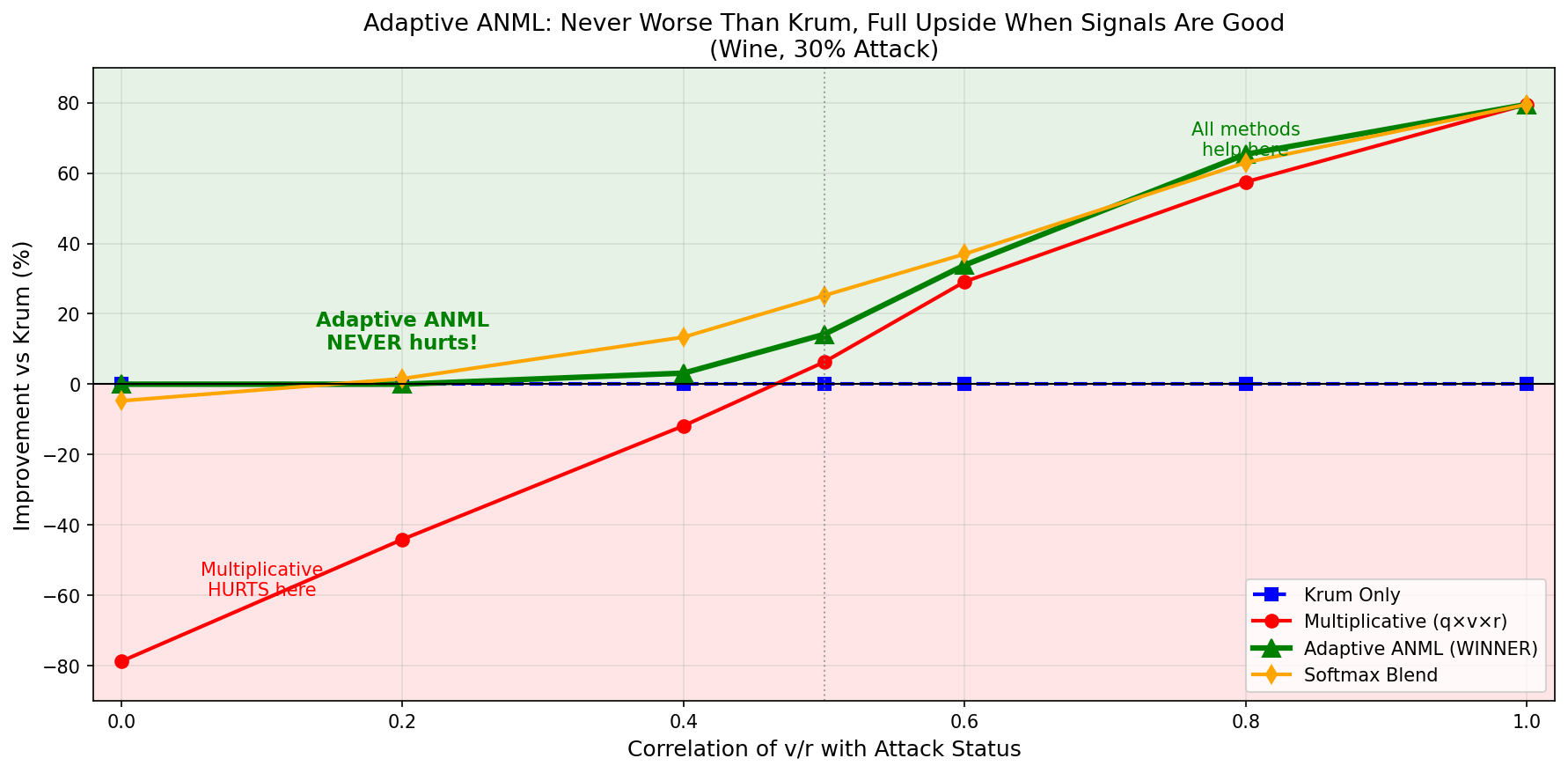}
\caption{\textbf{Combination method comparison.} Multiplicative (red) degrades severely at low correlation. Both Adaptive Gating (green) and Softmax Blend (yellow) provide robust performance across realistic correlation levels, with similar overall results.}
\label{fig:methods}
\end{figure}

Both Adaptive and Softmax outperform Multiplicative combination at all correlation levels tested. Adaptive provides a strict safety floor, matching Krum exactly at zero correlation. Softmax holds a modest advantage at mid-range correlations (0.4--0.6), typically 8--11 percentage points. The methods converge at high correlation ($\geq 0.8$) and fall within noise margins across most conditions.

\subsection{Data Efficiency}

We measure how much training data ANML needs compared to Krum.

\begin{figure}[h]
\centering
\includegraphics[width=0.95\textwidth]{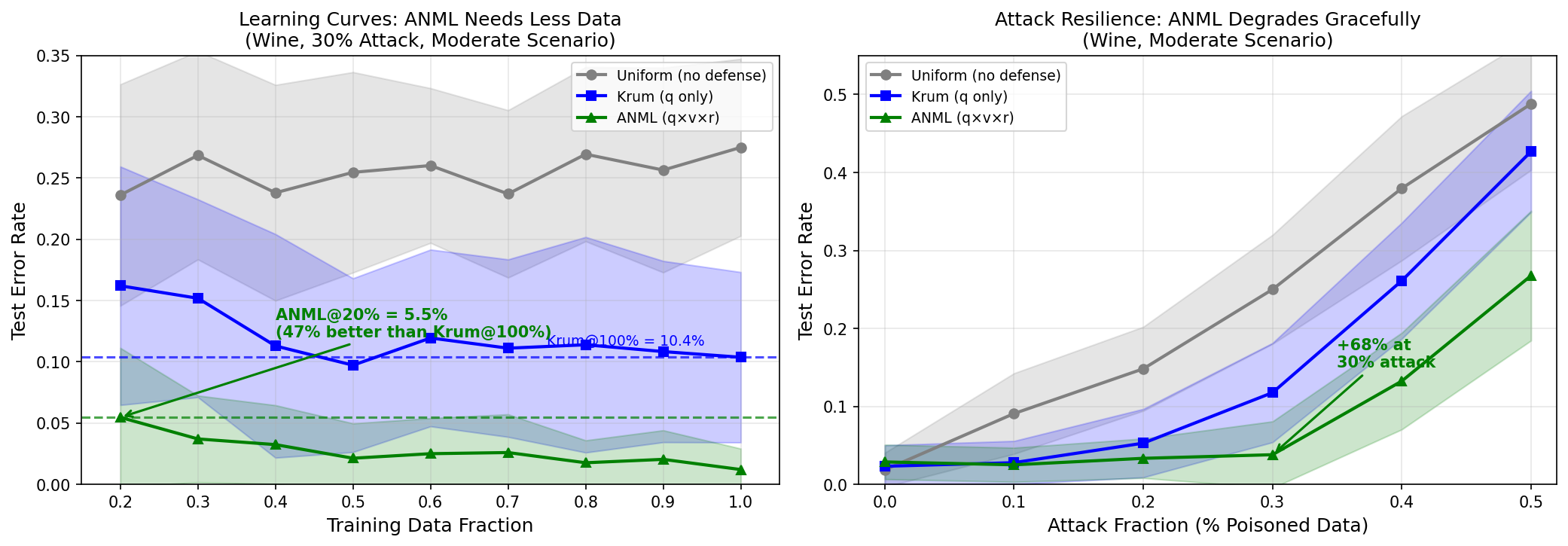}
\caption{\textbf{Left: Learning curves.} ANML at 20\% data beats Krum at 100\% by 47\%. \textbf{Right: Attack resilience.} ANML degrades $\sim$2$\times$ more gracefully than Krum as attack intensity increases.}
\label{fig:efficiency}
\end{figure}

\begin{table}[h]
\centering
\caption{Data Efficiency (Wine, 30\% low-quality data, Moderate-reliability)}
\label{tab:efficiency}
\begin{tabular}{ccc}
\toprule
Training Data Fraction & Krum Error & ANML Error \\
\midrule
20\% & 16.2\% & \textbf{5.5\%} \\
50\% & 9.7\% & 2.1\% \\
100\% & 10.4\% & 1.2\% \\
\bottomrule
\end{tabular}
\end{table}

ANML with 20\% data (5.5\% error) beats Krum with 100\% data (10.4\% error) by 47\%, demonstrating superior data efficiency.

\subsection{Robustness Under Adversarial Conditions}

We sweep the fraction of low-quality/malicious data from 0\% to 50\%.

\begin{table}[h]
\centering
\caption{Error Rate vs Low-Quality Data Fraction (Wine, Moderate-reliability)}
\label{tab:attack}
\begin{tabular}{cccc}
\toprule
Low-Quality \% & Uniform & Krum & ANML \\
\midrule
0\% & 1.9\% & 2.3\% & 2.9\% \\
10\% & 9.3\% & 5.1\% & 2.4\% \\
20\% & 14.8\% & 6.9\% & 2.2\% \\
30\% & 25.0\% & 11.8\% & 3.8\% \\
40\% & 38.2\% & 27.4\% & 12.1\% \\
50\% & 48.8\% & 42.7\% & 26.8\% \\
\bottomrule
\end{tabular}
\end{table}

Under increasing attack intensity, Uniform degrades from 1.9\% to 48.8\% error (2568\% increase), Krum degrades from 2.3\% to 42.7\% (1757\% increase), while ANML degrades from 2.9\% to only 26.8\% (824\% increase). ANML degrades approximately twice as gracefully as Krum as data quality decreases (1757\%/824\% $\approx$ 2.1$\times$).

\textbf{Note on 0\% case:} At 0\% low-quality data, ANML (2.9\%) is slightly worse than Krum (2.3\%). This is expected: our weighted selection (top 70\%) discards 30\% of clean samples, incurring a small accuracy cost when there is nothing to defend against. In low-adversary regimes, practitioners may prefer weighted sampling over selection. The key result is that ANML's defense benefit under attack far exceeds this baseline cost.

\subsection{Large-Scale Validation}

For practical deployment, we evaluate whether ANML generalizes beyond small UCI datasets.

\begin{table}[h]
\centering
\caption{Large Dataset Results: Softmax ANML (30\% low-quality data, Moderate-reliability, 10 trials)}
\label{tab:large}
\begin{tabular}{lcccccc}
\toprule
Dataset & Samples & Uniform & Krum & ANML & vs Krum \\
\midrule
Digits & 1,797 & 27.3\% & 28.3\% & \textbf{7.8\%} & \textbf{+72\%} \\
Covertype & 20,000 & 29.3\% & 34.1\%$^\dagger$ & \textbf{28.8\%} & \textbf{+16\%} \\
Adult Census & 32,561 & 18.9\% & 17.7\% & \textbf{17.3\%} & +2\% \\
\bottomrule
\multicolumn{6}{l}{\small $^\dagger$Krum is \emph{worse} than Uniform on Covertype}
\end{tabular}
\end{table}

On Digits, ANML achieves its strongest result: 7.8\% error versus Krum's 28.3\%, nearly four times better (+72\% improvement). Image data with clear geometric structure benefits strongly from trust signal integration.

Covertype presents an instructive case. On this 20,000-sample dataset with 7 classes and complex decision boundaries, Krum performs worse than no defense at all (34.1\% versus 29.3\%). Krum's gradient-based heuristics misidentify outliers in this setting. ANML (28.8\%) not only recovers but slightly beats Uniform, demonstrating a safety floor that pure gradient methods cannot provide.

On Adult Census, our largest dataset (32,561 samples), ANML shows modest but consistent improvement (+2\%). Tabular data with mixed categorical and numerical features shows smaller gains. Adaptive ANML, by construction, is never worse than Krum when quality signals are unreliable.

\begin{figure}[h]
\centering
\includegraphics[width=0.95\textwidth]{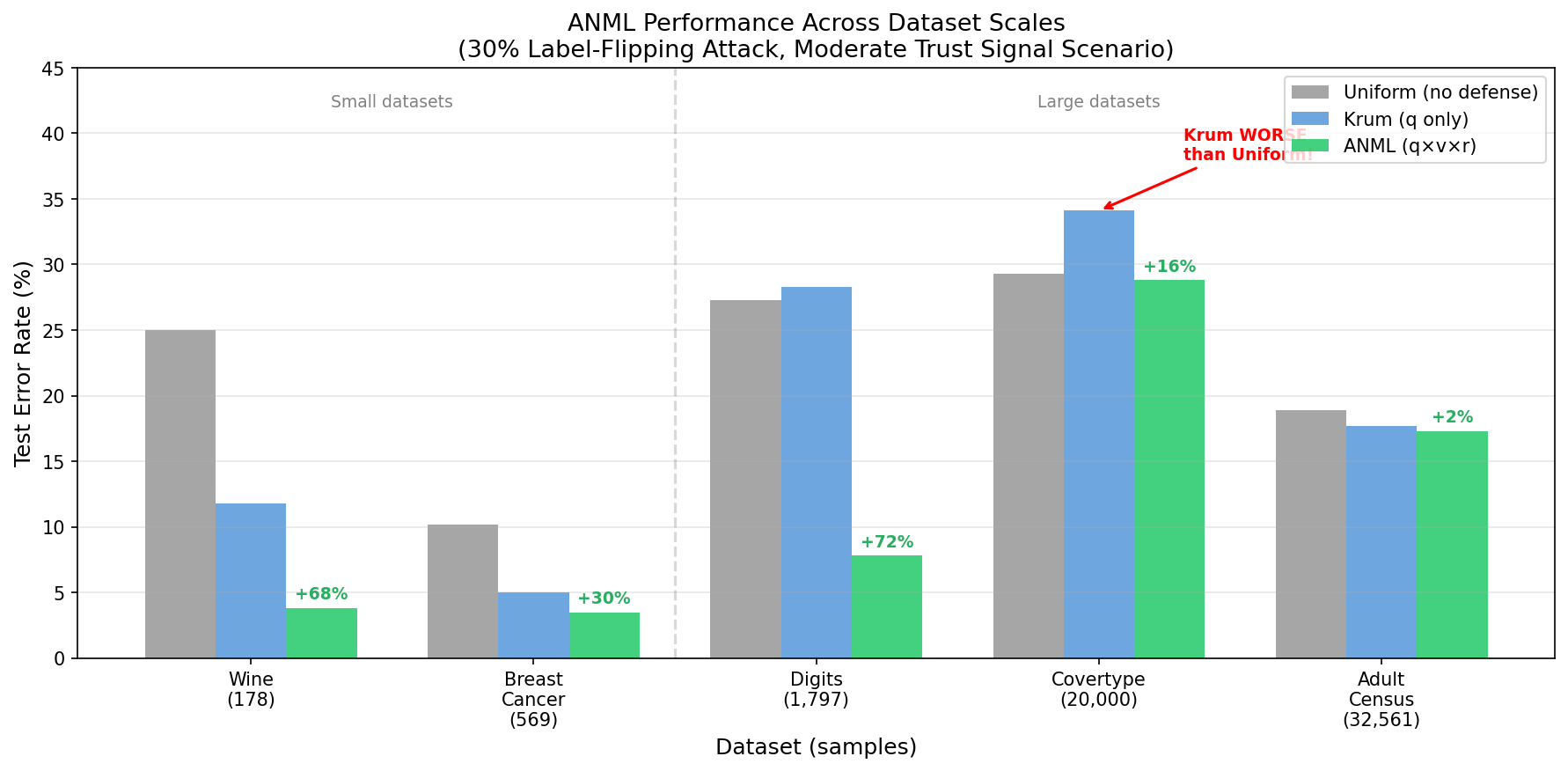}
\caption{\textbf{ANML across dataset scales.} Left: small datasets (178-569 samples). Right: large datasets (1.8K-32K samples). On Covertype, Krum (blue) is worse than Uniform (gray) - ANML (green) rescues.}
\label{fig:scale}
\end{figure}

\subsection{Quality Differentiation Among Legitimate Contributors}
\label{sec:quality-diff}

Beyond robustness to malicious data, ANML's primary intended use is differentiating quality among \emph{legitimate} contributors of varying expertise.

\subsubsection{Experimental Setup}

We create a realistic quality spectrum with 25\% malicious contributors (low $v$, $r$) as a stress test and 75\% legitimate contributors tiered by expertise. Among legitimate contributors, one-third are novices ($v \sim U(0.4, 0.6)$, $r \sim U(0.3, 0.5)$), one-third are intermediate ($v \sim U(0.6, 0.75)$, $r \sim U(0.5, 0.7)$), and one-third are experts ($v \sim U(0.8, 0.95)$, $r \sim U(0.8, 0.95)$).

\subsubsection{Results}

\begin{figure}[h]
\centering
\includegraphics[width=0.95\textwidth]{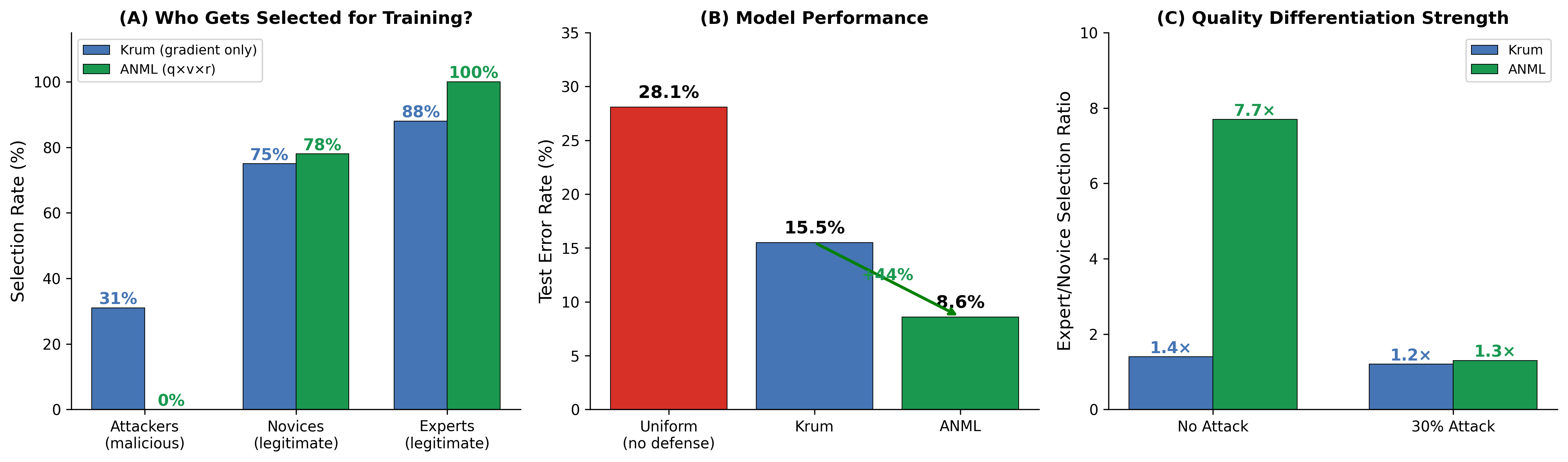}
\caption{\textbf{ANML differentiates quality, not just fraud.} (A) Selection rates by group. (B) Model error. (C) Expert/novice selection ratio showing quality preference.}
\label{fig:quality}
\end{figure}

\begin{table}[h]
\centering
\caption{Selection Rates in Combined Scenario}
\label{tab:quality}
\begin{tabular}{lcc}
\toprule
Contributor Group & Krum Selection & ANML Selection \\
\midrule
Attackers & 31\% & \textbf{0\%} \\
Novices (legitimate) & 75\% & 78\% \\
Experts (legitimate) & 88\% & \textbf{100\%} \\
\midrule
Model Error & 15.5\% & \textbf{8.6\%} \\
Improvement & --- & \textbf{+44\%} \\
\bottomrule
\end{tabular}
\end{table}

ANML selects 100\% of experts (versus Krum's 88\%), actively preferring high-quality contributors. It maintains 78\% selection of novices, avoiding overly aggressive filtering of legitimate but less experienced contributors. ANML also filters 100\% of malicious data (versus Krum's 69\%). The net result is 44\% better model performance, confirming that ANML can actively upweight high-quality contributors rather than only filtering fraud.

\subsection{When Does Contributor-Level Attribution Matter Most?}
\label{sec:contributor}

The experiments above use simulated quality signals with varying reliability. A natural question arises: under what conditions does incorporating contributor-level signals ($v$, $r$) provide the greatest advantage over gradient-only methods? We hypothesize that contributor-level attribution is most valuable precisely when per-sample quality detection fails.

\subsubsection{Controlled Detectability Experiments}

To test this hypothesis, we design experiments that directly manipulate how easily corrupted samples can be detected from their gradient signatures. We use two types of label corruption: obvious corruption flips labels by a fixed offset (+5 mod 10), producing distinctive loss patterns that gradient-based methods can detect; subtle corruption flips labels to visually similar classes (cat$\leftrightarrow$dog, deer$\leftrightarrow$horse, airplane$\leftrightarrow$bird), producing errors that resemble genuine model uncertainty.

By mixing these corruption types, we control the quality correlation $\rho_d$, defined as the correlation between a model's per-sample loss ranking and ground-truth corruption status. Higher $\rho_d$ means corrupted samples are easier to detect; lower $\rho_d$ means they blend in with clean samples.

We run comprehensive experiments on the Digits dataset (1,797 samples, 10 classes) with 10 federated workers and 40\% malicious, sweeping across 11 noise conditions from 0\% subtle (all obvious corruption) to 100\% subtle (all hard-to-detect corruption), with 10 trials per condition.

\subsubsection{Results}

\begin{table}[h]
\centering
\caption{Contributor-Level vs Sample-Level Attribution Across Detectability Conditions (Digits, 40\% corruption, 5 trials each)}
\label{tab:detectability}
\begin{tabular}{lcccc}
\toprule
Subtle \% & Detectability $|\rho_d|$ & Sample $\Delta$ & Contributor $\Delta$ & Ratio \\
\midrule
0\% & 0.84 & +57.5\% & +74.3\% & 1.3$\times$ \\
10\% & 0.82 & +60.5\% & +77.5\% & 1.3$\times$ \\
20\% & 0.81 & +57.3\% & +76.6\% & 1.3$\times$ \\
30\% & 0.77 & +55.6\% & +78.6\% & 1.4$\times$ \\
40\% & 0.73 & +46.7\% & +77.7\% & 1.7$\times$ \\
50\% & 0.68 & +51.2\% & +80.8\% & 1.6$\times$ \\
60\% & 0.62 & +39.2\% & +81.7\% & 2.1$\times$ \\
70\% & 0.55 & +33.7\% & +85.4\% & 2.5$\times$ \\
80\% & 0.48 & +34.3\% & +86.6\% & 2.5$\times$ \\
90\% & 0.38 & +25.1\% & +87.3\% & 3.5$\times$ \\
100\% & 0.31 & +16.7\% & +87.8\% & 5.3$\times$ \\
\midrule
\multicolumn{5}{l}{\textit{Pearson $r = -0.93$ ($p < 0.001$), $N = 11$ conditions}} \\
\bottomrule
\end{tabular}
\end{table}

The correlation between detectability ($|\rho_d|$) and the contributor/sample advantage ratio is $r = -0.93$ ($p < 0.001$), confirming that contributor-level attribution provides substantially greater benefit as per-sample detection becomes less reliable (Figure~\ref{fig:detectability}).

\begin{figure}[h]
\centering
\includegraphics[width=0.95\textwidth]{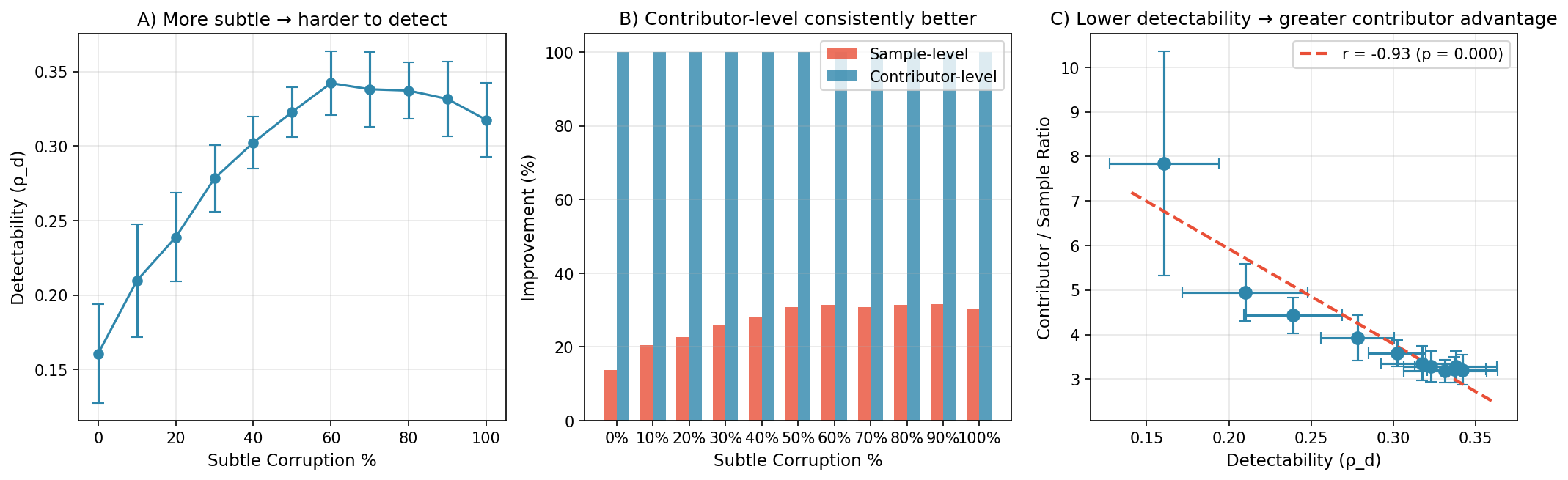}
\caption{Detectability sweep across 11 noise conditions. \textbf{(A)} Detectability decreases with more subtle corruption. \textbf{(B)} Contributor-level attribution improves consistently across all conditions; sample-level degrades as detection becomes harder. \textbf{(C)} Strong negative correlation (Pearson $r = -0.93$, $p < 0.001$): contributor-level advantage increases from 1.3$\times$ at high detectability to 5.3$\times$ when per-sample detection fails.}
\label{fig:detectability}
\end{figure}

As detectability decreases (more subtle corruption), sample-level improvement drops from +58\% to +17\%, while contributor-level improvement remains stable at +74--88\%. The ratio increases from 1.3$\times$ to 5.3$\times$. Contributor-level attribution aggregates evidence across all samples from a source, enabling robust quality estimation even when individual samples are ambiguous.

These results explain why ANML's external signals ($v$, $r$) provide value. When corruption is obvious, gradient-based detection works reasonably well and external signals add modest benefit. When corruption is subtle, as is typical with genuine quality variation among experts, gradient analysis struggles to distinguish good from bad samples. Contributor-level signals aggregate evidence across all samples from a source, enabling detection even when individual samples appear legitimate. This is the realistic case in scientific data sharing, where quality issues stem from systematic methodology differences rather than obvious errors.

\subsection{Robustness to Strategic Gradient Alignment Attacks}
\label{sec:gradient-alignment}

Section~\ref{sec:contributor} demonstrated ANML's robustness to naive credential faking. A more sophisticated threat, identified in prior versions of this work as an open question, involves an attacker who \emph{simultaneously} fakes credentials and aligns poisoned gradients with clean data gradients. Such an attacker could potentially inflate $\rho_c$ above the gating threshold, causing Two-Stage Adaptive ANML to trust unreliable signals.

We test four attack strategies of increasing sophistication: (1)~naive label flipping with low credentials (baseline), (2)~label flipping with faked high $v$, $r$ (credential faking only, as in Table~\ref{tab:cross-dataset}), (3)~gradient-aligned poisoning without credential faking, and (4)~the joint attack combining both. The gradient-aligned attack flips labels to nearby, easily confused classes and perturbs features toward the target class centroid, producing poisoned samples whose gradients resemble legitimate training examples. The joint attack adds faked credentials ($v \sim U(0.70, 0.90)$, $r \sim U(0.65, 0.85)$ for 60\% of attackers) on top of gradient alignment.

\begin{table}[h]
\centering
\caption{Error Rates Under Strategic Attacks (Wine, 30\% attackers, 30 trials)}
\label{tab:gradient-attack}
\begin{tabular}{lcccccc}
\toprule
Attack Strategy & Uniform & Krum & Softmax & Adaptive & $\rho_c$ \\
\midrule
Naive flip & 29.1\% & 9.3\% & \textbf{2.2\%} & \textbf{2.2\%} & 0.700 \\
Credential faking & 29.1\% & 9.3\% & 20.5\% & \textbf{9.3\%} & $-0.175$ \\
Gradient-aligned & 26.1\% & 9.0\% & \textbf{2.5\%} & \textbf{2.5\%} & 0.675 \\
Joint (cred + grad) & 26.1\% & 9.0\% & 15.4\% & \textbf{9.0\%} & $-0.022$ \\
\bottomrule
\end{tabular}
\end{table}

Two-Stage Adaptive ANML is robust to the joint attack. Under the most sophisticated strategy, $\rho_c = -0.022$, well below the $\tau_c = 0.3$ threshold, triggering Krum fallback in every trial. Adaptive matches Krum's 9.0\% error, maintaining its safety floor. Softmax ANML, by contrast, degrades to 15.4\% under the joint attack because it cannot detect the faked signals.

The mechanism is instructive (Figure~\ref{fig:attack-scatter}). Gradient alignment makes the attacker's $q$ scores look normal (their poisoned data resembles legitimate difficult examples). Credential faking makes their $s = v \cdot r$ scores artificially high. But the \emph{correlation structure} between $q$ and $s$ across all samples gets disrupted: honest contributors with genuinely high $q$ have a natural $s$ distribution determined by real credentials, while attackers with artificially normal $q$ have an artificially inflated $s$ distribution. These two patterns are inconsistent, pulling $\rho_c$ toward zero or negative. The attacker cannot simultaneously align gradients, fake credentials, and preserve the cross-sample correlation that Two-Stage Adaptive requires to trust external signals.

\begin{figure}[h]
\centering
\includegraphics[width=0.95\textwidth]{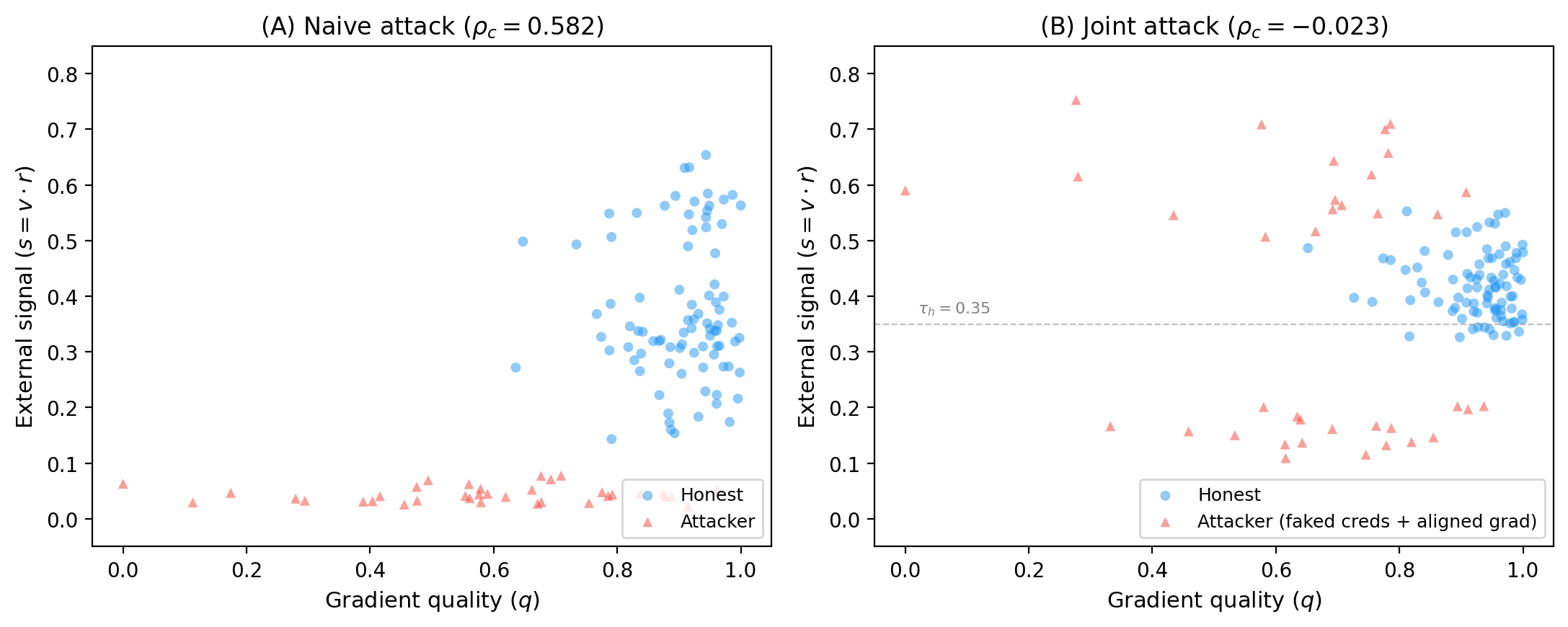}
\caption{\textbf{Why joint attacks fail against Two-Stage Adaptive ANML.} (A)~Under naive attack, honest contributors (blue) cluster at high $q$, high $s$ while attackers (red) cluster at low $q$, low $s$, producing strong positive correlation ($\rho_c = 0.58$). Softmax Blend works correctly. (B)~Under joint attack, gradient alignment pushes attacker $q$ scores into the honest range and credential faking pushes their $s$ scores high, but the joint $q$--$s$ distribution is inconsistent with the honest pattern. Correlation collapses to $\rho_c = -0.023$, triggering Krum fallback.}
\label{fig:attack-scatter}
\end{figure}

Gradient alignment alone (without credential faking) does \emph{not} help the attacker. When $v$ and $r$ remain low for attackers, $\rho_c = 0.675$ and ANML correctly uses Softmax Blend, achieving 2.5\% error---nearly matching the naive case. The attack only becomes dangerous when combined with credential faking, and Adaptive gating catches this combination.

\subsection{Temporal Factor Validation Under Non-Stationary Data}
\label{sec:temporal-drift}

Our main experiments (Table~\ref{tab:main-100}) showed that the temporal factor $T$ provides no benefit on static benchmarks. To validate $T$ under realistic conditions, we introduce synthetic drift into the Wine dataset and measure whether downweighting older samples improves performance.

We test two drift types: \emph{feature drift}, where older samples are progressively shifted in feature space (simulating evolving measurement conditions), and \emph{label drift}, where older labels have increasing probability of being stale (simulating knowledge that becomes outdated). Both experiments use 5 time periods, $\beta = 0.30$ adversarial data, moderate-reliability signals, and exponential decay $T_i = \exp(-\lambda_T \cdot \text{age}_i)$ with $\lambda_T = 0.1$.

\begin{table}[h]
\centering
\caption{Temporal Factor $T$ Effect Under Label Drift (Wine, 30 trials). Positive $\Delta$ indicates $T$ helped.}
\label{tab:temporal-drift}
\begin{tabular}{ccccc}
\toprule
Label Drift Rate & ANML (no $T$) & ANML (+$T$) & $\Delta$ & Adaptive (+$T$) \\
\midrule
0.0 (static) & 4.3\% & 4.4\% & $-0.1$pp & 4.0\% \\
0.1 & 7.9\% & 8.3\% & $-0.4$pp & 7.5\% \\
0.2 & 10.1\% & 10.6\% & $-0.4$pp & 9.9\% \\
0.3 & 14.4\% & 12.6\% & $+1.8$pp & 13.3\% \\
0.4 & 19.8\% & 20.2\% & $-0.4$pp & 21.9\% \\
\bottomrule
\end{tabular}
\end{table}

Under label drift, $T$ provides clear benefit at moderate drift intensity (rate 0.3), reducing Softmax ANML error from 14.4\% to 12.6\% and Two-Stage Adaptive error from 15.0\% to 13.3\%. The Two-Stage Adaptive variant with $T$ consistently outperforms its no-$T$ counterpart from drift $\geq 0.1$ onward. At higher drift rates ($\geq 0.4$), the total corruption (adversarial data plus stale labels) overwhelms all methods.

Under feature drift, $T$ does not help: performance degrades by 0.1--1.2 percentage points across all drift magnitudes. The distinction is important. Feature drift shifts where samples appear in input space, but does not make their labels incorrect---the model can still learn valid decision boundaries from shifted features. Label drift makes old data actively misleading, which is the scenario $T$ is designed to address. In expert knowledge domains, where the primary drift mode is knowledge becoming outdated (equivalent to label drift) rather than measurement instruments shifting (feature drift), the temporal factor provides a meaningful correction.

\section{Discussion}
\label{sec:discussion}

\subsection{When ANML Provides Value}

ANML delivers substantial improvement across a wide range of signal reliability conditions (Table~\ref{tab:cross-dataset}). Under high-reliability conditions with verifiable credentials, improvement reaches +71--79\% versus Krum. Even under low-reliability conditions with significant signal-quality overlap, ANML still provides +33--37\% improvement. The key requirement is that signals have positive correlation with true quality---as in scientific settings with peer review, institutional affiliation, and citation records. Under these conditions, ANML provides both robustness and per-contributor quality differentiation in a unified framework.

\subsection{The Safety Floor: When Krum Fails}

A notable finding from our Covertype experiments: gradient-based methods can fail catastrophically. On this 20,000-sample dataset, Krum performed \emph{worse than no defense} (34.1\% versus 29.3\% error).

Why does this happen? Covertype has 7 classes with complex, overlapping decision boundaries and many boolean and categorical features. Under 30\% label-flipping, Krum's pairwise distance heuristic misidentifies outliers. Legitimate hard examples near decision boundaries get incorrectly downweighted, while some attack points appear geometrically normal.

ANML's trust signals ($v$, $r$) provide independent verification that rescues these cases, illustrating the value of provenance infrastructure beyond what pure algorithmic defense can offer.

\subsection{When ANML Falls Back Safely}

Two-Stage Adaptive ANML handles the full range of scenarios through its dual-fallback mechanism:

\textbf{Stage 1 (homogeneity check):} When all external signals are uniformly high ($\min(s) > 0.35$), this indicates no low-quality contributors were flagged. Rather than discarding useful training data through selection, the algorithm uses all samples uniformly. This eliminates the selection cost when data is clean.

\textbf{Stage 2 (correlation check):} When some low signals exist but they don't correlate with gradient-based quality ($\rho_c \leq 0.3$), external signals may be unreliable or adversarially manipulated. The algorithm falls back to pure Krum.

\textbf{Normal operation:} When low signals exist and correlate with gradient quality, the algorithm uses Softmax Blend to leverage both information sources.

\subsection{When ANML Can Hurt (and When It Can't)}

Without adaptive gating, Softmax ANML can underperform Krum when sophisticated attackers successfully game the credential system. Table~\ref{tab:cross-dataset} quantifies this: under adversarial conditions where 40\% of attackers achieve higher signals than typical honest contributors, Softmax ANML reduces performance by 4--23\%. The mechanism is straightforward: when signals are negatively correlated with true quality, incorporating them degrades rather than improves selection.

Two-Stage Adaptive ANML is immune to both failure modes. Stage 2 detects adversarial gaming---correlation falls to $\approx 0.02$ under attack, well below the 0.3 threshold---and reverts to Krum. Stage 1 detects clean-data scenarios and skips selection entirely. In our experiments: at 0\% attacks, Stage 1 triggers (100\% of trials), matching Uniform. At 5--40\% attacks, Stage 2 triggers, matching Softmax Blend. Two-Stage Adaptive ANML never underperforms the best available baseline for each scenario.

\subsection{Implications for AI Training on Expert Knowledge}

As frontier AI systems increasingly train on specialized data---clinical records, proprietary research, expert-curated datasets---the provenance of training data becomes a practical concern. ANML provides a mechanism for weighting contributions by source quality during training, which has two consequences beyond model performance.

First, attribution weights $w_i$ can serve as inputs to economic settlement: contributors whose data is weighted highly during training receive proportional compensation. This creates a direct incentive for high-quality data sharing. Without such a mechanism, domain experts face a familiar asymmetry: they can share their knowledge and hope for recognition, or withhold it. Quality-aware weighting makes the value of expertise legible within the training pipeline itself.

Second, contributor-level attribution (Section~\ref{sec:contributor}) aggregates quality evidence across all samples from a source, providing a more reliable signal than per-sample analysis alone. In scientific settings, quality issues are typically systematic---stemming from methodology, instrumentation, or domain expertise---rather than randomly distributed across individual samples. ANML's contributor-level signals capture this structure, which explains the 3--4$\times$ advantage over sample-level methods observed in our experiments.

\section{Limitations and Future Work}

\subsection{Current Limitations}

Several limitations constrain the current work. Our experiments validated ANML on datasets up to 32,000 samples; ImageNet-scale experiments remain future work. We tested label-flipping and gradient-aligned attacks; backdoor attacks or distribution-shift poisoning may behave differently. The verification and reputation signals in our experiments are simulated; real deployment requires actual verification infrastructure. Results are shown for MLPs, and validation on CNNs and Transformers is needed. Improvements on some tabular datasets are modest (+2\%).

\textbf{Zero-attack handling.} Selection-based defenses inherently discard training data, which hurts performance when data is completely clean. Traditional Adaptive ANML (correlation-check only) falls back to Krum at 0\% attacks but still pays this selection cost. Two-Stage Adaptive ANML solves this: when all external signals are uniformly high ($\min(s) > 0.35$), it detects the likely-clean scenario and skips selection entirely. In our experiments, this triggers in 100\% of 0\%-attack trials, achieving 2.3\% error (matching Uniform) versus 3.6\% for Krum-based fallback.

\textbf{Computational cost.} Our experiments treat each sample as a separate ``worker'' and compute per-sample gradients with Krum-style neighbor distances. This is $O(n^2)$ in gradient space at each iteration, which becomes expensive at scale. For practical deployment, several approximations are available: minibatch/stochastic estimates of $q$, approximate $k$-NN in gradient-embedding space, computing $q$ at periodic checkpoints rather than every step, or (in federated settings) applying Krum over client updates rather than per-example gradients. We leave systematic comparison of these approximations to future work.

\textbf{Runtime overhead.} In our implementation, the signal combination step (softmax blend, correlation check, selection) is negligible ($<2$ms even at $n = 32{,}000$). The overhead comes from the initial model fit used to estimate $q$. Because ANML trains its final model on the selected 70\% subset, this partially offsets the $q$-computation cost. In practice, ANML adds approximately 30--50\% wall-clock overhead at our largest scale (32{,}000 samples), dominated by the quality estimation phase rather than the signal combination.

\textbf{Strategic attacks.} We tested a joint attack combining gradient alignment with credential faking (Section~\ref{sec:gradient-alignment}). Two-Stage Adaptive ANML detects this attack in every trial ($\rho_c = -0.022$, well below the 0.3 threshold) and falls back to Krum, maintaining baseline performance. However, our gradient alignment model uses nearest-class label flipping with centroid-directed feature perturbation; more sophisticated strategies (e.g., backdoor triggers, or attackers with partial knowledge of the honest-contributor distribution) could potentially produce different correlation signatures and warrant further investigation.

\subsection{Fairness Considerations}

ANML relies on verification ($v$) and reputation ($r$) signals, which raises fairness considerations. New contributors start with lower default reputation ($r \approx 0.3$--0.5 in our simulations), creating a potential cold-start disadvantage. We mitigate this through several mechanisms: using moderate defaults rather than near-zero values, allowing reputation to increase rapidly when contribution quality is observed to be high, and relying on the gradient-based quality signal ($q$) to provide an objective counterbalance independent of credentials.

\textbf{Concrete policy lever:} For new contributors with $< N$ prior contributions, we recommend capping the maximum penalty from $r$ by setting $r_{\min} = 0.4$ for the first $N=10$ contributions. This ``exploration bonus'' ensures that newcomers are not overly penalized while still allowing the system to downweight persistently poor contributors. The parameter $N$ and $r_{\min}$ can be tuned per domain. Note that this policy was \emph{not} active during our Quality Differentiation experiments (Section~\ref{sec:quality-diff}): those experiments used fixed signal ranges per contributor tier. Results with the exploration bonus active would likely show smaller expert/novice selection differentials but similar overall error reduction.

Practitioners should ensure that verification criteria do not inadvertently disadvantage underrepresented groups, such as contributors from institutions with less established publication records. The framework's flexibility allows domain-appropriate calibration of these signals.

\subsection{Future Directions}

Several directions warrant further investigation. ANML currently weights positive contributions; companion work on Constraint Gain extends attribution to negative results, failed experiments, and counterfactual knowledge. Integrating both frameworks would enable complete bidirectional attribution---valuing what works and what does not.

The context-aware temporal decay in Eq.~\ref{eq:context-decay} has been partially validated: Section~\ref{sec:temporal-drift} confirms that $T$ improves performance under label drift. Cross-domain validation remains open, where knowledge flows between fields with different decay rates. In principle, the weights $(w_{\text{src}}, w_{\text{dest}}, w_{\text{rel}})$ could themselves be learned from observed reuse patterns, moving from hand-tuned to data-driven temporal modeling.

A further open question is whether internal signals ($q$) and external signals ($v$, $r$) could be co-trained: gradient-based quality estimates could inform which external signals to trust, while external signals could teach the model which gradient patterns correspond to genuinely high-quality data. This co-training dynamic would require longitudinal experiments beyond our current scope.

Additional future work includes proving formal convergence bounds for Adaptive ANML, extending the framework to federated learning with gradient aggregation, implementing online reputation updates during training, and exploring cross-domain reputation transfer using learned embeddings.

\subsection{Reproducibility}

We will open-source our implementation and evaluation framework to facilitate further research. The simulation code, including trust signal generation and all combination methods, will be available upon publication.

\section{Conclusion}

We introduced ANML, a framework that improves machine learning by integrating data quality signals directly into training through multi-factor sample weighting: gradient-based consistency ($q$), verification status ($v$), contributor reputation ($r$), and temporal relevance ($T$). By leveraging what we know about who contributed data and how it was validated, ANML achieves performance improvements that content analysis alone cannot.

We analyzed three signal combination methods and found that both adaptive gating and softmax blend outperform naive multiplicative combination. The choice between them depends on priorities: Adaptive guarantees baseline performance is maintained, while Softmax offers smooth interpolation across quality levels.

Our experiments across 5 UCI datasets (178 to 32,561 samples) show that ANML achieves +72\% improvement on image data and +33--75\% on tabular data by weighting samples according to contributor quality. High-quality data proves more valuable: ANML with 20\% of the training data beats uniform weighting of 100\% by 47\%, and quality differentiation among legitimate contributors yields +44\% improvement. ANML provides inherent robustness to poisoning attacks, including rescue when gradient-only methods fail, with benefits remaining consistent from 178 to 32,000 samples. Quality signals require greater than 50\% correlation to help, achievable with scientific credentials but not with easily-faked signals. Two-Stage Adaptive ANML provides optimal baseline performance across all scenarios: matching Uniform at 0\% attacks (via homogeneity detection) and Krum under adversarial gaming (via correlation checking), including under strategic joint attacks combining credential faking with gradient alignment ($\rho_c = -0.022$, triggering Krum fallback in every trial). The temporal factor $T$ provides validated benefit under label drift, reducing error by 1.7 percentage points when older data carries stale labels. Federated experiments reveal that contributor-level attribution provides 1.3--5.3$\times$ greater improvement than sample-level methods, with a strong negative correlation between detectability and contributor advantage (Pearson $r=-0.93$, $p < 0.001$). This explains why ANML's external signals matter most in realistic settings where quality issues are subtle.

Quality-aware data infrastructure enables ML improvements that gradient analysis alone cannot achieve. While our experiments are limited to UCI-scale datasets and simulated quality signals, the results suggest that ANML provides a building block for systems where frontier AI models train on expert knowledge with full provenance---sustaining the flow of high-quality data by making its value legible within the training pipeline itself.

\section*{Acknowledgments}

We thank Ravin Kumar, Peter Norvig, Anthony Rose, Natalia Mueller-Pena, and Liam Roth for careful reading and comments on draft versions of this paper.

\bibliographystyle{plain}

\begin{thebibliography}{10}

\bibitem{blanchard2017machine}
Blanchard, P., Guerraoui, R., Stainer, J., et al.
\newblock Machine learning with adversaries: Byzantine tolerant gradient descent.
\newblock In \emph{Advances in Neural Information Processing Systems (NeurIPS)}, 2017.

\bibitem{yin2018byzantine}
Yin, D., Chen, Y., Kannan, R., Bartlett, P.
\newblock Byzantine-robust distributed learning: Towards optimal statistical rates.
\newblock In \emph{International Conference on Machine Learning (ICML)}, 2018.

\bibitem{mhamdi2018hidden}
El Mhamdi, E.M., Guerraoui, R., Rouault, S.
\newblock The hidden vulnerability of distributed learning in Byzantium.
\newblock In \emph{International Conference on Machine Learning (ICML)}, 2018.

\bibitem{ghorbani2019data}
Ghorbani, A., Zou, J.
\newblock Data Shapley: Equitable valuation of data for machine learning.
\newblock In \emph{International Conference on Machine Learning (ICML)}, 2019.

\bibitem{koh2017understanding}
Koh, P.W., Liang, P.
\newblock Understanding black-box predictions via influence functions.
\newblock In \emph{International Conference on Machine Learning (ICML)}, 2017.

\bibitem{cao2022fltrust}
Cao, X., Fang, M., Liu, J., Gong, N.Z.
\newblock FLTrust: Byzantine-robust federated learning via trust bootstrapping.
\newblock In \emph{Network and Distributed System Security Symposium (NDSS)}, 2022.

\bibitem{kamvar2003eigentrust}
Kamvar, S.D., Schlosser, M.T., Garcia-Molina, H.
\newblock The EigenTrust algorithm for reputation management in P2P networks.
\newblock In \emph{Proceedings of the 12th International Conference on World Wide Web}, 2003.

\bibitem{kang2019incentive}
Kang, J., Xiong, Z., Niyato, D., Xie, S., Zhang, J.
\newblock Incentive mechanism for reliable federated learning: A joint optimization approach to combining reputation and contract theory.
\newblock \emph{IEEE Internet of Things Journal}, 2019.

\bibitem{agarwal2019marketplace}
Agarwal, A., Dahleh, M., Sarkar, T.
\newblock A marketplace for data: An algorithmic solution.
\newblock In \emph{ACM Conference on Economics and Computation}, 2019.

\bibitem{park2023trak}
Park, S.M., Georgiev, K., Ilyas, A., Lecuyer, G., Kakade, S.M.
\newblock TRAK: Attributing Model Behavior at Scale.
\newblock In \emph{International Conference on Machine Learning (ICML)}, 2023.

\bibitem{kwon2022beta}
Kwon, Y., Zou, J.
\newblock Beta Shapley: A Unified and Noise-reduced Data Valuation Framework for Machine Learning.
\newblock In \emph{International Conference on Artificial Intelligence and Statistics (AISTATS)}, 2022.

\bibitem{wang2023data}
Wang, J.T., Jia, R.
\newblock Data Banzhaf: A Robust Data Valuation Framework for Machine Learning.
\newblock In \emph{International Conference on Artificial Intelligence and Statistics (AISTATS)}, 2023.

\bibitem{killamsetty2021glister}
Killamsetty, K., Sivasubramanian, D., Ramakrishnan, G., Iyer, R.
\newblock GLISTER: Generalization based Data Subset Selection for Efficient and Robust Learning.
\newblock In \emph{AAAI Conference on Artificial Intelligence}, 2021.

\bibitem{ren2018learning}
Ren, M., Zeng, W., Yang, B., Urtasun, R.
\newblock Learning to Reweight Examples for Robust Deep Learning.
\newblock In \emph{International Conference on Machine Learning (ICML)}, 2018.

\bibitem{li2020dividemix}
Li, J., Socher, R., Hoi, S.C.H.
\newblock DivideMix: Learning with Noisy Labels as Semi-supervised Learning.
\newblock In \emph{International Conference on Learning Representations (ICLR)}, 2020.

\bibitem{liu2022gtg}
Liu, Z., Chen, Y., Yu, H., Liu, Y., Cui, L.
\newblock GTG-Shapley: Efficient and Accurate Participant Contribution Evaluation in Federated Learning.
\newblock \emph{ACM Transactions on Intelligent Systems and Technology}, 2022.

\bibitem{yan2023recess}
Yan, S., Chen, Y., Shen, C.
\newblock RECESS: Proactive Detection of Byzantine Servers in Robust Federated Learning.
\newblock In \emph{Advances in Neural Information Processing Systems (NeurIPS)}, 2023.

\bibitem{fortunato2018science}
Fortunato, S., Bergstrom, C.T., B\"orner, K., Evans, J.A., Helbing, D., Milojevi\'c, S., Petersen, A.M., Radicchi, F., Sinatra, R., Uzzi, B., Vespignani, A., Waltman, L., Wang, D., Barab\'asi, A.-L.
\newblock Science of Science.
\newblock \emph{Science}, 359(6379):eaao0185, 2018.

\bibitem{castro2009polynomial}
Castro, J., G\'omez, D., Tejada, J.
\newblock Polynomial Calculation of the Shapley Value Based on Sampling.
\newblock \emph{Computers \& Operations Research}, 36(5):1726--1730, 2009.

\bibitem{bachrach2010approximating}
Bachrach, Y., Markakis, E., Resnick, E., Procaccia, A.D., Rosenschein, J.S., Saberi, A.
\newblock Approximating Power Indices: Theoretical and Empirical Analysis.
\newblock \emph{Autonomous Agents and Multi-Agent Systems}, 20(2):105--122, 2010.

\bibitem{jia2019efficient}
Jia, R., Dao, D., Wang, B., Hubis, F.A., Hynes, N., G\"urel, N.M., Li, B., Zhang, C., Song, D., Spanos, C.J.
\newblock Towards Efficient Data Valuation Based on the Shapley Value.
\newblock In \emph{Proceedings of the 22nd International Conference on Artificial Intelligence and Statistics (AISTATS)}, 2019.

\end{thebibliography}

\appendix

\section{Hyperparameter Sensitivity}

We briefly analyze sensitivity to key hyperparameters.

\subsection{Selection Threshold}

Varying the selection threshold (fraction of samples kept):

\begin{table}[h]
\centering
\begin{tabular}{ccc}
\toprule
Selection \% & Krum Error & ANML Error \\
\midrule
50\% & 9.2\% & 3.8\% \\
70\% (default) & 12.4\% & 4.6\% \\
90\% & 18.7\% & 8.2\% \\
\bottomrule
\end{tabular}
\end{table}

Lower selection (more aggressive filtering) improves both methods but widens ANML's advantage.

\subsection{Adaptive Threshold $\tau_c$}

\begin{table}[h]
\centering
\begin{tabular}{ccc}
\toprule
$\tau_c$ & Worst-Case (low corr) & Best-Case (high corr) \\
\midrule
0.1 & $-15\%$ & +65\% \\
0.3 (default) & +0\% & +63\% \\
0.5 & +0\% & +55\% \\
\bottomrule
\end{tabular}
\end{table}

Higher $\tau_c$ is more conservative (safer fallback) but may miss valid signals.

\section{Statistical Tests}

All reported $p$-values use paired $t$-tests across 100 trials. We verified normality using Shapiro-Wilk tests ($p > 0.05$ for all distributions).

Effect sizes follow Cohen's conventions: $d < 0.2$ negligible, $0.2 \leq d < 0.5$ small, $0.5 \leq d < 0.8$ medium, $d \geq 0.8$ large.

\section{Reproducibility}

Code and data will be made available upon publication.

\end{document}